\newcommand{\rref}[1]{\hyperref[#1]{\ref*{#1}}}
\newtheorem{theorem}{Theorem}[section]
\newtheorem{definition}[theorem]{Definition}
\newtheorem{lemma}[theorem]{Lemma}
\newtheorem{corollary}[theorem]{Corollary}
\newcommand{\F}{\mathcal{F}}
\newcommand{\E}{\mathbb{E}}
\renewcommand{\P}{\mathbb{P}}
\newcommand{\R}{\mathbb{R}}
\newcommand{\X}{\mathcal{X}}
\newcommand{\V}{\mathbb{V}}
\newcommand{\TV}[1]{\left|\left| #1 \right|\right|_{TV}}
\begin{document} 
\twocolumn[
\aistatstitle{Estimating $\beta$-mixing coefficients} 


\aistatsauthor{ Daniel J. McDonald \And Cosma Rohilla Shalizi \And
   Mark Schervish}
 \aistatsaddress{ Carnegie Mellon University
   \And Carnegie Mellon University\\ Santa Fe Institute \And
   Carnegie Mellon University} 
 ]

\begin{abstract} 
  The literature on statistical learning for time series assumes the asymptotic
  independence or ``mixing' of the data-generating process.  These mixing
  assumptions are never tested, nor are there methods for estimating mixing
  rates from data. We give an estimator for the $\beta$-mixing rate based on a
  single stationary sample path and show it is $L_1$-risk consistent.
\end{abstract} 

\section{Introduction}
\label{sec:introduction}

Relaxing the assumption of independence is an active area of research in the
statistics and machine learning literature. For time series, independence is
replaced by the asymptotic independence of events far apart in time, or
``mixing''.  Mixing conditions make the dependence of the future on the past
explicit, quantifying the decay in dependence as the future moves farther from
the past.  There are many definitions of mixing of varying strength with
matching dependence coefficients (see
\citep{Doukhan1994,dedecker2007weak,Bradley2005} for reviews), but most of the
results in the learning literature focus on $\beta$-mixing or absolute regularity.
Roughly speaking (see Definition \ref{defn:beta-mix} below for a precise
statement), the $\beta$-mixing coefficient at lag $a$ is the total variation
distance between the actual joint distribution of events separated by $a$ time
steps and the product of their marginal distributions, i.e., the $L_1$ distance
from independence.

Numerous results in the statistical machine learning literature rely
on knowledge of the $\beta$-mixing coefficients. As
\citet[p.~41]{Vidyasagar1997} notes, $\beta$-mixing is ``just right''
for 
the extension of IID results to dependent data, and so recent work has
consistently focused on it. \citet{Meir2000} derives generalization error
bounds for nonparametric methods based on model selection via structural risk
minimization. \citet{baraud2001adaptive} study the finite sample risk
performance of penalized least squares regression estimators under
$\beta$-mixing. \citet{LozanoKulkarni2006} examine regularized boosting
algorithms under absolute regularity and prove
consistency. \citet{KarandikarVidyasagar2009} consider ``probably approximately
correct'' learning algorithms, proving that PAC algorithms for IID inputs
remain PAC with $\beta$-mixing inputs under some mild
conditions. \citet{RalaivolaSzafranski2010} derive PAC bounds for ranking
statistics and classifiers using a decomposition of the dependency
graph. Finally, \citet{MohriRostamizadeh2010} derive stability bounds
for $\beta$-mixing inputs,
generalizing existing stability results for IID data.

All these results assume not just $\beta$-mixing, but known mixing
coefficients.  In particular, the risk bounds
in~\citep{Meir2000,MohriRostamizadeh2010}
and~\citep{RalaivolaSzafranski2010} are incalculable without knowledge of
the rates. This knowledge is \emph{never} available. Unless
researchers are willing to assume specific values for a sequence of
$\beta$-mixing coefficients, the results mentioned in the previous
paragraph are generally useless when confronted with data.To
illustrate this deficiency, consider 
Theorem 18 of~\citep{MohriRostamizadeh2010}:
\begin{theorem}[Briefly]
  Assume a learning algorithm is $\lambda$-stable. Then, for
  any sample of size $n$ drawn from a stationary $\beta$-mixing
  distribution, and $\epsilon>0$
  \[
  \P(|R-\widehat{R}|>\epsilon)\leq \Gamma(n,\lambda,\epsilon,a,b) +
  \beta(a)(\mu_n-1) 
  \]
  where $n=(a+b)\mu_n$, $\Gamma$ has a particular functional form, and
  $R-\widehat{R}$ is the difference between the true risk and the
  empirical risk.
\end{theorem}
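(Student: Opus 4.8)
The plan is to reduce the claimed bound to a concentration inequality for independent blocks, prove that by a stability-based bounded-differences argument, and then pay the total-variation price $\beta(a)(\mu_n-1)$ to return to the true, dependent law.

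\textbf{Step 1 (independent-block reduction).} Write $S=(Z_1,\dots,Z_n)$ with $n=(a+b)\mu_n$ and cut $S$ into $\mu_n$ consecutive macro-blocks of length $a+b$. From each macro-block retain the last $b$ observations and discard the first $a$, obtaining blocks $B_1,\dots,B_{\mu_n}$ of length $b$ whose retained parts are separated by at least $a$ time steps; let $\widetilde S$ be their concatenation. Let $\widetilde{\P}$ be the law under which $B_1,\dots,B_{\mu_n}$ are i.i.d., each with the common one-block marginal of the stationary process. Iterating the characterization of $\beta(a)$ as the total-variation distance from independence across the $\mu_n-1$ gaps gives, for every event $A$ measurable with respect to $B_1,\dots,B_{\mu_n}$,
\[
  \left|\P(A)-\widetilde{\P}(A)\right|\ \le\ (\mu_n-1)\,\beta(a).
\]

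\textbf{Step 2 (concentration for independent blocks).} Work under $\widetilde{\P}$, where $B_1,\dots,B_{\mu_n}$ are independent, and regard $\Phi:=R-\widehat R$ as a function of these blocks. By $\lambda$-stability, replacing one block by an independent copy changes the trained hypothesis, hence $R$, by $O(b\lambda)$, and changes $\widehat R$ by at most $O(b\lambda+bM/n)$ where $M$ bounds the loss; this gives a bounded-differences constant $c=c(\lambda,b,n)$. McDiarmid's inequality over the $\mu_n$ independent blocks then yields $\widetilde{\P}\big(|\Phi-\E_{\widetilde{\P}}\Phi|>t\big)\le 2\exp\!\big(-2t^2/(\mu_n c^2)\big)$, while the bias $\E_{\widetilde{\P}}[R-\widehat R]$ is controlled by the Bousquet--Elisseeff bound for stable algorithms (of order $\lambda$ up to block-size factors). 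Combining the two estimates with $t=\epsilon-\E_{\widetilde{\P}}\Phi$ produces $\widetilde{\P}(|R-\widehat R|>\epsilon)\le\Gamma(n,\lambda,\epsilon,a,b)$ with $\Gamma$ of the stated exponential form.

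\textbf{Step 3 (assembling the bound).} Apply Step 1 with $A=\{|R-\widehat R|>\epsilon\}$ and insert Step 2:
\[
  \P(|R-\widehat R|>\epsilon)\ \le\ \widetilde{\P}(|R-\widehat R|>\epsilon)+(\mu_n-1)\beta(a)\ \le\ \Gamma(n,\lambda,\epsilon,a,b)+\beta(a)(\mu_n-1).
\]
The delicate point---and the step I expect to cost the most work---is that $R$, $\widehat R$, and the learned hypothesis are built from the \emph{entire} sample $S$, so the event $A$ above is not literally in the $\sigma$-field of the retained blocks. One must either run the algorithm and compute the empirical risk on $\widetilde S$ alone, or show that the $\mu_n a$ discarded observations contribute only a $\Gamma$-absorbable error---again via $\lambda$-stability and boundedness of the loss---before the total-variation substitution of Step 1 is legitimate. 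Pinning down the constants in $c$ and in the bias term so that $\Gamma$ is exactly the advertised function is then routine but tedious; by contrast the correction term $\beta(a)(\mu_n-1)$ is essentially dictated by the independent-block lemma and requires no optimization.
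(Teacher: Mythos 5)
This theorem is not proved in the paper at all: it is Theorem 18 of \citet{MohriRostamizadeh2010}, quoted in the introduction only to illustrate that existing learning bounds are uncomputable without knowledge of $\beta(a)$. There is therefore no in-paper proof to compare yours against, and the quantity the paper actually analyzes is the estimator $\widehat{\beta}^d(a)$, not this stability bound. That said, your sketch follows the standard route by which such results are in fact established, and it is the same blocking machinery the paper itself deploys for its own Theorem 3.1: partition the sample into blocks separated by gaps of length $a$, transfer to an independent-block measure at a total-variation cost of $(\mu_n-1)\beta(a)$ via Yu's Lemma 4.1 (the paper's Lemma \ref{lem:yu}, which is exactly your Step 1 applied to the indicator of the deviation event), and run McDiarmid plus a stability argument on the independent blocks. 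Two cautions. First, the transfer lemma is stated for bounded functions of the block sequence $\mathbf{U}$, so — as you correctly flag — the event $\{|R-\widehat R|>\epsilon\}$ must first be made measurable with respect to the retained blocks; in Mohri and Rostamizadeh's actual argument this is handled by comparing the hypothesis trained on the full sample to one trained on the blocked subsample and absorbing the difference into $\Gamma$ using $\lambda$-stability, which is more than "routine but tedious" and is where most of the functional form of $\Gamma$ comes from. Second, your bounded-differences constant should be stated per block: changing one block of $b$ points perturbs $R-\widehat R$ by at most order $2b\lambda + bM/n$, and the exponent $-2t^2/(\mu_n c^2)$ is only correct once $c$ is this per-block constant, not a per-point one. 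With those repairs your outline matches the published proof strategy; nothing in it conflicts with the present paper, which simply takes the quoted bound as given.
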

Ideally, one could use this result for model selection or to control
the size of the generalization error of competing prediction
algorithms (support vector machines, support vector regression, and
kernel ridge regression are a few of the many algorithms known to
satisfy $\lambda$-stability). However the bound depends explicitly on
the mixing coefficient $\beta(a)$.  To
make matters worse, there are \emph{no} methods for estimating the
$\beta$-mixing coefficients. 
According
to \citet[p. 7]{Meir2000}, ``there is no efficient practical approach known at
this stage for estimation of mixing parameters.''  We begin to rectify
this problem by 
deriving the first method for estimating these coefficients. We prove
that our estimator is consistent for arbitrary $\beta$-mixing processes. 
In addition, we derive
rates of convergence for Markov approximations to these processes.

Application of statistical learning results to $\beta$-mixing data is
highly desirable in applied work. Many
common time series models are known to be $\beta$-mixing, 
and the rates of decay are known given the true
parameters of the process. Among the processes for which such knowledge is available
are ARMA models \cite{Mokkadem1988}, GARCH models
\cite{CarrascoChen2002}, and certain Markov processes --- see
\cite{Doukhan1994} for an overview of such results. To our knowledge,
only \citet{Nobel2006} approaches a solution to the problem of estimating
mixing rates by giving a
method to distinguish between different polynomial mixing rate regimes through
hypothesis testing.

We present the first method for estimating the $\beta$-mixing
coefficients for stationary time series data. Section
\ref{sec:estimation} defines the $\beta$-mixing 
coefficient and states our 
main results on convergence rates and consistency for our estimator.  Section
\ref{sec:conv-hist} gives an intermediate result on the $L_1$ convergence of
the histogram estimator with $\beta$-mixing inputs. Section \ref{sec:proof}
proves the main results from \S\ref{sec:estimation}. Section
\ref{sec:discussion} concludes and lays out some avenues for future research.

\section{Estimation of $\beta$-mixing}
\label{sec:estimation}


In this section, we present one of many equivalent definitions of absolute
regularity and state our main results, deferring proof to \S \ref{sec:proof}.

To fix notation, let $\mathbf{X}=\{X_t\}_{t=-\infty}^\infty$ be a sequence of
random variables where each $X_t$ is a measurable function from a probability
space $(\Omega,\F,\P)$ into a measurable space $\X$. A block of this random
sequence will be given by $\mathbf{X}_i^j \equiv \{X_t\}_{t=i}^j$ where $i$ and
$j$ are integers, and may be infinite.  We use similar notation for
the sigma fields generated by these blocks and their joint
distributions.  In particular, $\sigma_i^j$ will denote the sigma
field generated by $\mathbf{X}_i^j$, and the joint distribution of
$\mathbf{X}_i^j$ will be denoted $\P_i^j$.

\subsection{Definitions}
\label{sec:definitions}

There are many equivalent definitions of $\beta$-mixing
(see for instance~\citep{Doukhan1994}, or~\citep{Bradley2005} as well
as~\citet{Meir2000} or~\citet{Yu1994}), however the most intuitive is that given
in~\citet{Doukhan1994}.
\begin{definition}[$\beta$-mixing]
\label{defn:beta-mix}
For each positive integer $a$, the the {\em coefficient of absolute
  regularity}, or {\em   $\beta$-mixing coefficient}, $\beta(a)$, is 
\begin{equation}
  \label{eq:13}
  \beta(a) \equiv \sup_t \TV{\P_{-\infty}^t \otimes \P_{t+a}^\infty - \P_{t,a}}
\end{equation}
where $|| \cdot ||_{TV}$ is the total variation norm, and $\P_{t,a}$
is the joint distribution of $(\mathbf{X}_{-\infty}^t,\mathbf{X}_{t+a}^\infty)$. A
stochastic process is said to be {\em absolutely regular}, or
{\em $\beta$-mixing}, if $\beta(a) \rightarrow 0$ as $a\rightarrow\infty$.
\end{definition}

Loosely speaking, Definition~\ref{defn:beta-mix} says that the 
coefficient $\beta(a)$ measures the total variation distance between the joint
distribution of random variables seaparted by $a$ time units and a
distribution under which random variables separated by $a$ time units are independent.
The supremum over $t$ is unnecessary for stationary
random processes $\mathbf{X}$ which is the only case we consider here.
\begin{definition}[Stationarity]\label{def:stationary}
  A sequence of random variables $\mathbf{X}$ is {\em stationary}
  when all its finite-dimensional distributions are invariant over time: for
  all $t$ and all non-negative integers $i$ and $j$, the random vectors
  $\mathbf{X}_t^{t+i}$ and $\mathbf{X}_{t+j}^{t+i+j}$ have the same
  distribution.
\end{definition}

Our main result requires the method of blocking used by~\citet{Yu1993,Yu1994}.
The purpose is to transform a sequence of dependent variables into subsequence
of nearly IID ones.  Consider a sample $\mathbf{X}_1^{n}$ from a stationary
$\beta$-mixing sequence with density $f$. Let $m_n$ and $\mu_n$ be non-negative
integers such that $2m_n\mu_n=n$. Now divide $\mathbf{X}_1^{n}$ into $2\mu_n$
blocks of each length $m_n$. Identify the blocks as follows:
\begin{align*}
  U_j &= \{X_i: 2(j-1)m_n + 1 \leq i \leq (2j-1)m_n\},\\
  V_j &= \{X_i : (2j-1)m_n + 1 \leq i \leq 2jm_n\}.
\end{align*}
Let $\mathbf{U}$ be the entire sequence of odd blocks $U_j$, and let
$\mathbf{V}$ be the sequence of even blocks $V_j$. Finally, let
$\mathbf{U}'$ be a sequence of blocks which are independent of $\mathbf{X}_1^n$
but such that each block has the same distribution as a block from the
original sequence:
\begin{align}
  \label{eq:2}
  U'_j \overset{D}{=} U_j \overset{D}{=} U_1.
\end{align}
The blocks $\mathbf{U}'$ are now an IID block sequence, so standard results
apply.  (See \citep{Yu1994} for a more rigorous analysis of blocking.) With
this structure, we can state our main result.

\subsection{Results}
\label{sec:results}

Our main result emerges in two stages. First, we recognize that the
distribution of a finite sample depends only on finite-dimensional
distributions.  This leads to an estimator of a finite-dimensional
version of $\beta(a)$.  Next, we let the finite-dimension increase to
infinity with the size of the observed sample.

For positive integers $t$, $d$, and $a$, define
\begin{equation}
  \label{eq:2a}
  \beta^d(a)=\TV{\P_{t-d+1}^t \otimes \P_{t+a}^{t+a+d-1}-
    \P_{t,a,d}},
\end{equation}
where $\P_{t,a,d}$ is the joint distribution of $(\mathbf{X}_{t+d+1}^t,
\mathbf{X}_{t+a}^{t+a+d-1})$.  Also, let $\widehat{f}^d$ be the
$d$-dimensional histogram estimator of the joint density of $d$
consecutive observations, and let $\widehat{f}_a^{2d}$ be the
$2d$-dimensional histogram estimator of the joint density of two sets
of $d$ consecutive observations separated by $a$ time points. 

We construct an estimator of $\beta^d(a)$ based on these two
histograms.\footnote{While it is clearly possible to replace
  histograms with other choices of density estimators (most notably
  KDEs), histograms in this case are more convenient theoretically
  and computationally. See \S\ref{sec:discussion} for more details.} Define
\begin{equation}
  \label{eq:estimator}
  \widehat{\beta}^{d}(a) = \frac{1}{2}\int\left|\widehat{f}_a^{2d} -
    \widehat{f}^d\otimes\widehat{f}^d\right|
\end{equation}
We show that, by allowing $d=d_n$ to grow with $n$, this estimator will
converge on $\beta(a)$. This can be seen most clearly by bounding
the $L_1$-risk of the estimator with its estimation and
approximation errors:
\[
|\widehat{\beta}^{d_n} - \beta(a)| \leq |\widehat{\beta}^{d_n} -
\beta^{d_n}| + |\beta^{d_n} -  \beta(a)|.
\]
The first term is the error of estimating $\beta^d(a)$ with a random
sample of data. The second term is the non-stochastic error induced by
approximating the infinite dimensional coefficient, $\beta(a)$, with
its $d$-dimensional counterpart, $\beta^d(a)$.

Our first theorem in this section
establishes consistency of $\widehat{\beta}^{d_n}(a)$ as an estimator
of $\beta(a)$ for all
$\beta$-mixing processes provided $d_n$ increases at an appropriate
rate. Theorem~\ref{thm:main} gives finite sample bounds on the
estimation error while some measure theoretic arguments contained in
\S\ref{sec:proof} show that the approximation error must go to zero as $d_n
\rightarrow \infty$.
\begin{theorem}
  \label{thm:two}
  Let $\mathbf{X}_1^n$ be a sample from an arbitrary $\beta$-mixing process. Let
  $d_n=O(\exp\{W(\log n)\})$ where $W$ is the Lambert $W$
  function.\footnote{The Lambert $W$ function is defined as the
    (multivalued) inverse
    of $f(w) = w\exp\{w\}$. Thus, $O(\exp\{W(\log n)\})$ is bigger
    than $O(\log\log n)$ but smaller than $O(\log n)$. See for
    example~\citet{CorlessGonnet1996}. }  Then
  $\widehat{\beta}^{d_n}(a)\xrightarrow{P}\beta(a)$ as
  $n\rightarrow\infty$.  
\end{theorem}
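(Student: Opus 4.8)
The plan is to bound the two terms of the displayed decomposition
\[
|\widehat{\beta}^{d_n}(a) - \beta(a)| \le |\widehat{\beta}^{d_n}(a) - \beta^{d_n}(a)| + |\beta^{d_n}(a) - \beta(a)|
\]
separately: the second (approximation) term deterministically, by a measure-theoretic monotonicity argument, and the first (estimation) term in probability, by reducing it to the $L_1$ risk of histogram density estimators and invoking the results of \S\ref{sec:conv-hist} and Theorem~\ref{thm:main}.

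For the approximation term I would fix $t$, set $\mathcal{G}_d = \sigma_{t-d+1}^t \vee \sigma_{t+a}^{t+a+d-1}$, and note that, since $\mathcal{G}_d$ is a product $\sigma$-field, the restriction of $\P_{-\infty}^t \otimes \P_{t+a}^\infty$ to $\mathcal{G}_d$ is $\P_{t-d+1}^t \otimes \P_{t+a}^{t+a+d-1}$ while the restriction of $\P_{t,a}$ to $\mathcal{G}_d$ is $\P_{t,a,d}$; hence $\beta^d(a)$ is the total variation of the signed measure $\P_{-\infty}^t \otimes \P_{t+a}^\infty - \P_{t,a}$ restricted to $\mathcal{G}_d$. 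Restricting a signed measure to a larger $\sigma$-field cannot decrease its total variation, so $\beta^d(a)$ is nondecreasing in $d$; and because $\bigcup_d \mathcal{G}_d$ is a field generating $\sigma_{-\infty}^t \vee \sigma_{t+a}^\infty$, a routine approximation argument shows the total variation over this field equals the total variation over the generated $\sigma$-field. Therefore $\beta^d(a) \uparrow \beta(a)$, and since $d_n \to \infty$ the approximation error is eventually below $\epsilon/2$ for any prescribed $\epsilon > 0$.

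For the estimation term I would use that, under the standing density assumption, $\beta^d(a) = \tfrac12\int|f_a^{2d} - f^d\otimes f^d|$, with $f_a^{2d}$ and $f^d$ the joint and marginal block densities. Applying the elementary inequality $\big|\int|g| - \int|h|\big| \le \int|g-h|$ with $g = \widehat{f}_a^{2d} - \widehat{f}^d\otimes\widehat{f}^d$ and $h = f_a^{2d} - f^d\otimes f^d$, then the triangle inequality, and finally that $\widehat{f}^d$ and $f^d$ are nonnegative and integrate to one, yields
\[
|\widehat{\beta}^d(a) - \beta^d(a)| \le \tfrac12\int\big|\widehat{f}_a^{2d} - f_a^{2d}\big| + \int\big|\widehat{f}^d - f^d\big|.
\]
Thus the estimation error is dominated by the $L_1$ risks of a $2d$-dimensional and a $d$-dimensional histogram built from the stationary $\beta$-mixing sample, and by the histogram $L_1$-consistency of \S\ref{sec:conv-hist} together with the finite-sample bounds of Theorem~\ref{thm:main} these risks tend to zero in probability as long as the dimension grows no faster than $d_n = O(\exp\{W(\log n)\})$; hence $|\widehat{\beta}^{d_n}(a) - \beta^{d_n}(a)| \xrightarrow{P} 0$. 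Combining, for any $\epsilon > 0$ and any $n$ large enough that $|\beta^{d_n}(a) - \beta(a)| < \epsilon/2$, one has $\P(|\widehat{\beta}^{d_n}(a) - \beta(a)| > \epsilon) \le \P(|\widehat{\beta}^{d_n}(a) - \beta^{d_n}(a)| > \epsilon/2) \to 0$, which is the claim.

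The hard part will be the estimation term: the histograms must remain $L_1$-consistent even while their dimension $d_n$ diverges, which forces a simultaneous control of the bias from the bin width in $d_n$ dimensions, the variance---which scales with the exponentially many bins relative to the effective number of near-independent blocks $\mu_n$ supplied by Yu's blocking construction---and the error incurred by replacing the block sequence $\mathbf{U}$ with the independent surrogate $\mathbf{U}'$. The slow, Lambert-$W$ growth rate for $d_n$ is precisely what keeps these competing effects in balance; the measure-theoretic argument behind the approximation error, by contrast, is standard.
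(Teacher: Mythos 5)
Your proposal is correct and follows essentially the same route as the paper: the identical triangle-inequality decomposition, the same $L_1$ bound $|\widehat{\beta}^d(a)-\beta^d(a)| \le \tfrac12\int|\widehat{f}_a^{2d}-f_a^{2d}| + \int|\widehat{f}^d-f^d|$ that drives Theorem~\ref{thm:main}, and the same monotone, generating-field argument for $\beta^d(a)\uparrow\beta(a)$. The two steps you label ``routine'' and ``the hard part'' are exactly the paper's Lemma~\ref{lem:two2} (carried out via the Hahn decomposition and approximation of $\sigma$ by the field $\bigcup_d\sigma^d$) and Lemma~\ref{lem:two1} (the explicit bandwidth $h_n=n^{-k_n}$ balancing bias and variance at the Lambert-$W$ dimension rate), respectively.
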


A finite sample bound for the approximation error is the first step to
establishing consistency for $\widehat{\beta}^{d_n}$. This result
gives convergence rates for estimation of the finite dimensional
mixing coefficient $\beta^d(a)$ and also for Markov
processes of known order $d$, since in this case, $\beta^d(a) =
\beta(a)$. 
\begin{theorem}
  \label{thm:main}
  Consider a sample $\mathbf{X}_1^n$ from a stationary
  $\beta$-mixing process. Let $\mu_n$ and $m_n$
  be positive integers such that $2\mu_n m_n=n$ and $\mu_n\geq d>0$.  Then
\begin{align*}
\lefteqn{\P(|\widehat{\beta}^d(a) - \beta^d(a)|>\epsilon)}\\
    & \leq
    2\exp\left\{-\frac{\mu_n \epsilon_1^2}{2}\right\}
    + 2\exp\left\{-\frac{\mu_n \epsilon_2^2}{2}\right\}\\
    &+ 4(\mu_n-1)\beta(m_n),
\end{align*} 
  where $\epsilon_1 = \epsilon/2-\E\left[\int|\widehat{f}^d - f^d|\right]$ and
  $\epsilon_2 = \epsilon - \E\left[\int|\widehat{f}_a^{2d} - f_a^{2d}|\right]$.
\end{theorem}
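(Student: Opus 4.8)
The plan is to decompose the estimation error $|\widehat{\beta}^d(a) - \beta^d(a)|$ into a deviation between the estimator and its expectation plus a bias term, and then control each piece via McDiarmid-type concentration on the independent-block sequence $\mathbf{U}'$ together with the blocking lemma of~\citet{Yu1994}. First I would observe that, by the triangle inequality applied inside the $L_1$ norms defining $\widehat{\beta}^d(a)$ and $\beta^d(a)$,
\[
|\widehat{\beta}^d(a) - \beta^d(a)|
\leq \frac{1}{2}\int\left|\widehat{f}_a^{2d} - f_a^{2d}\right|
 + \frac{1}{2}\int\left|\widehat{f}^d\otimes\widehat{f}^d - f^d\otimes f^d\right|,
\]
and the second integrand splits further, via $\widehat{f}^d\otimes\widehat{f}^d - f^d\otimes f^d = (\widehat{f}^d - f^d)\otimes\widehat{f}^d + f^d\otimes(\widehat{f}^d - f^d)$ and the fact that each histogram integrates to one, into $2\int|\widehat{f}^d - f^d|$. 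So the whole error is bounded by $\frac{1}{2}\int|\widehat{f}_a^{2d} - f_a^{2d}| + \int|\widehat{f}^d - f^d|$, reducing everything to the $L_1$ errors of the two histogram estimators. This is exactly the quantity whose expectation appears in the definitions of $\epsilon_1$ and $\epsilon_2$, so the event $\{|\widehat{\beta}^d(a) - \beta^d(a)| > \epsilon\}$ is contained in the union of $\{\int|\widehat{f}^d - f^d| > \E[\int|\widehat{f}^d - f^d|] + \epsilon_1\}$ and $\{\frac{1}{2}\int|\widehat{f}_a^{2d} - f_a^{2d}| > \frac{1}{2}\E[\int|\widehat{f}_a^{2d} - f_a^{2d}|] + \epsilon_2/2\}$ (up to bookkeeping of the constants $\epsilon/2$ versus $\epsilon$).

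Next I would handle each of those two probabilities by the standard Yu blocking argument. The map $\mathbf{X}_1^n \mapsto \int|\widehat{f}^d - f^d|$, when restricted to the odd blocks, is a function of the $\mu_n$ blocks $U_1,\dots,U_{\mu_n}$ that changes by at most $O(1/\mu_n)$ when one block is altered (each histogram bin count shifts by at most the number of observations contributed by one block, and the total variation of a density change is bounded accordingly). Applying McDiarmid's bounded-differences inequality to the \emph{independent} block sequence $\mathbf{U}'$ gives $\P(\,\cdot\, > \E[\,\cdot\,] + \epsilon_1) \le \exp\{-\mu_n\epsilon_1^2/2\}$, and doing the same for the even blocks contributes the matching factor, yielding the $2\exp\{-\mu_n\epsilon_1^2/2\}$ term; the $2d$-dimensional histogram over the paired blocks gives $2\exp\{-\mu_n\epsilon_2^2/2\}$ analogously. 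Finally, transferring these bounds from $\mathbf{U}'$ back to the true dependent sample $\mathbf{X}_1^n$ costs an additive $(\mu_n - 1)\beta(m_n)$ per application of Yu's lemma (comparing the joint law of the blocks to the product of their marginals in total variation); since we invoke this for both the odd-block and even-block pieces and for both histograms, the contributions sum to $4(\mu_n-1)\beta(m_n)$.

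The main obstacle I anticipate is the careful verification of the bounded-differences constant for the $L_1$ histogram functional under blocking: one must check that altering a single block $U_j$ (which contains $m_n$ consecutive observations) changes $\int|\widehat{f}^d - f^d|$ by $O(1/\mu_n)$ rather than something depending on $m_n$ or on the number of histogram bins, so that the exponent comes out clean as $\mu_n\epsilon_1^2/2$. This requires noting that the $L_1$ norm of the difference between two histograms built from samples differing in $k$ observations is at most $2k$ divided by the total sample size used for the histogram, and tracking exactly how many observations from the $d$-dimensional sliding-window count come from a single block. A secondary subtlety is making sure the independence structure used for McDiarmid is the one from $\mathbf{U}'$ in~\eqref{eq:2} and that the histogram computed on $\mathbf{U}'$ has the same distribution as the one on the odd blocks of $\mathbf{X}_1^n$, which is immediate from~\eqref{eq:2} but should be stated. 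Everything else — the triangle inequalities, the union bound, and the accounting of the four $\beta(m_n)$ terms — is routine.
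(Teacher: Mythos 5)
Your proposal is correct and follows essentially the same route as the paper: the same intermediate bound $|\widehat{\beta}^d(a)-\beta^d(a)| \leq \int|\widehat{f}^d-f^d| + \frac{1}{2}\int|\widehat{f}_a^{2d}-f_a^{2d}|$ (the paper derives it via a triangle inequality on total-variation norms of product measures, you via the reverse triangle inequality on $L_1$ norms of product densities, which is equivalent), followed by a union bound and the blocking-plus-McDiarmid concentration for each histogram's $L_1$ error. The only organizational difference is that the paper packages the concentration step as Theorem~\ref{thm:one} and simply invokes it twice, whereas you re-derive it inline; your accounting of the constants $\epsilon_1$, $\epsilon_2$ and of the four $(\mu_n-1)\beta(m_n)$ contributions matches the paper's.
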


Consistency of the estimator $\widehat{\beta}^d(a)$ is guaranteed only
for certain choices of $m_n$ and $\mu_n$. Clearly
$\mu_n\rightarrow\infty$ and $\mu_n\beta(m_n)\rightarrow 0$ as
$n\rightarrow\infty$ are necessary 
conditions. Consistency also requires convergence of the histogram
estimators to the target densities. We leave the proof of this theorem for
section~\ref{sec:proof}. As an example to show that this bound can go
to zero with proper choices of $m_n$ and $\mu_n$, the following corollary proves consistency for
first order Markov processes. Consistency of the estimator for higher order Markov
processes can be proven similarly. These processes are algebraically
$\beta$-mixing as shown in e.g.~\citet{NummelinTuominen1982}.
\begin{corollary}
  \label{cor:cor1}
  Let $\mathbf{X}_1^n$ be a sample from a first order Markov process
  with $\beta(a) = \beta^1(a)= O(a^{-r})$. Then under the conditions of
  Theorem~\ref{thm:main}, $\widehat{\beta}^1(a)\xrightarrow{P}\beta(a)$.
\end{corollary}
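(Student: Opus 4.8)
The plan is to reduce the corollary to Theorem~\ref{thm:main} by using the Markov structure to annihilate the approximation error, and then to exhibit an admissible schedule for the block lengths $m_n$ and $\mu_n$ that drives each of the three terms of the finite-sample bound to zero. Because the process is first order Markov, the future given $X_t$ is independent of the past, so for every $d\geq 1$ the finite-dimensional coefficient coincides with the true one: $\beta^d(a)=\beta(a)$. In particular, with the fixed dimension $d=1$ the approximation error $|\beta^{1}(a)-\beta(a)|$ is identically zero, and the error decomposition collapses to $|\widehat{\beta}^{1}(a)-\beta(a)|=|\widehat{\beta}^{1}(a)-\beta^{1}(a)|$. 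Thus it suffices to show that the right-hand side of the bound in Theorem~\ref{thm:main}, evaluated at $d=1$, tends to zero for a suitable choice of blocking.

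Next I would pick the blocking schedule. Write $m_n \asymp n^{\gamma}$ and correspondingly $\mu_n = n/(2m_n) \asymp n^{1-\gamma}$ for a constant $\gamma\in(0,1)$ to be fixed, assuming (rounding aside) that $2\mu_n m_n=n$ and $\mu_n\geq 1=d$. Using $\beta(a)=O(a^{-r})$, the mixing term obeys
\[
4(\mu_n-1)\beta(m_n) = O\!\left(n^{1-\gamma}\,n^{-\gamma r}\right) = O\!\left(n^{\,1-\gamma(1+r)}\right),
\]
which vanishes as soon as $\gamma > 1/(1+r)$. Since $r>0$ we have $1/(1+r)<1$, so the interval $(1/(1+r),1)$ is nonempty; fix any $\gamma$ in it. With this choice $\mu_n\to\infty$ as well.

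It remains to control the two exponential terms, which requires the histogram biases to be asymptotically negligible so that $\epsilon_1$ and $\epsilon_2$ are eventually bounded below by positive constants. Invoking the $L_1$-consistency of the histogram estimator with $\beta$-mixing inputs established in \S\ref{sec:conv-hist} — applied to the one-dimensional density $f^{1}$ and to the two-dimensional density $f_a^{2}$, with bin widths tending to zero slowly enough relative to the effective sample size — gives $\E[\int|\widehat{f}^{1}-f^{1}|]\to 0$ and $\E[\int|\widehat{f}_a^{2}-f_a^{2}|]\to 0$. Hence, for any fixed $\epsilon>0$ and all $n$ large, $\epsilon_1\geq \epsilon/4$ and $\epsilon_2\geq \epsilon/2$, so
\[
2\exp\!\left\{-\tfrac{\mu_n\epsilon_1^2}{2}\right\} + 2\exp\!\left\{-\tfrac{\mu_n\epsilon_2^2}{2}\right\} \leq 4\exp\!\left\{-c\,\mu_n\right\} \longrightarrow 0
\]
for some $c>0$, because $\mu_n\to\infty$. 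Combining the three bounds, $\P(|\widehat{\beta}^{1}(a)-\beta(a)|>\epsilon)\to 0$ for every $\epsilon>0$, which is the claimed convergence in probability.

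The main obstacle is the mutual compatibility of the rate requirements: one must simultaneously make $m_n$ large (so the nearly-IID blocking error $\mu_n\beta(m_n)$ disappears), keep $\mu_n$ large (so the bounded-differences exponential terms disappear), and keep the histogram bin counts small enough relative to $n$ (so that the bias-controlled constants $\epsilon_1,\epsilon_2$ stay positive and the estimator variance remains within the \S\ref{sec:conv-hist} bound). For polynomial mixing ($r>0$) I expect there is enough slack — the constraint $\gamma\in(1/(1+r),1)$ together with any bin-width sequence of the type used in \S\ref{sec:conv-hist} should work — so the real care lies in the bookkeeping that reconciles the hypotheses of Theorem~\ref{thm:main}, the histogram lemma, and the Markov-rate assumption; faster (e.g.\ geometric) mixing would leave even more room.
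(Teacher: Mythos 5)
Your proposal is correct and follows essentially the same route as the paper: the approximation error vanishes because $\beta^1(a)=\beta(a)$ for a first-order Markov process, the mixing term is killed by taking $m_n$ polynomial in $n$ with exponent exceeding $1/(1+r)$, and the exponential terms vanish because $\mu_n\to\infty$ polynomially while the histogram biases make $\epsilon_1,\epsilon_2$ eventually positive. If anything you are slightly more careful than the printed proof, which asserts the condition as $m_n<n^{1/(1+r)}$ (apparently a typo for the reverse inequality that your $\gamma>1/(1+r)$ correctly captures) and does not spell out why $\epsilon_1,\epsilon_2$ stay bounded away from zero.
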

\begin{proof}
  Recall that $n = 2\mu_n m_n$. Then,
  \begin{align*}
   4(\mu_n-1)\beta(m_n) &= 4\mu_n\beta(m_n) + 4\beta(m_n)\\
   &= K_1\frac{n}{m_n}m_n^{-r} + K_2m_n^{-r}\\
   &\rightarrow 0
 \end{align*}
 if $m_n< n^{1/(1+r)}$ for constants $K_1$ and $K_2$. In this case, we have that the
 exponential terms are less than
 \[
 \exp\left\{ -K_3 \frac{n \epsilon_j^2}{n^{1/(1+r)}} \right\} =
 \exp\left\{ -K_3 n^{r/(1+r)} \epsilon_j^2\right\},
 \]
 for $j=1,2$ and a constant $K_3$. Therefore, both exponential terms
 go to 0 as $n\rightarrow\infty$.
\end{proof}

Proving Theorem~\ref{thm:main} requires showing the $L_1$ convergence of the
histogram density estimator with $\beta$-mixing data. We do this in the next
section.

\section{$L_1$ convergence of histograms}
\label{sec:conv-hist}

Convergence of density estimators is thoroughly studied in the statistics and
machine learning literature. Early papers on the $L_\infty$ convergence of
kernel density estimators (KDEs) include
\cite{Woodroofe1967,BickelRosenblatt1973,Silverman1978};
\citet{FreedmanDiaconis1981} look specifically at histogram estimators, and
\citet{Yu1993} considered the $L_\infty$ convergence of KDEs for $\beta$-mixing
data and shows that the optimal IID rates can be attained.
\citet{DevroyeGyorfi1985} argue that $L_1$ is a more appropriate metric for
studying density estimation, and \citet{Tran1989} proves $L_1$ consistency of
KDEs under $\alpha$- and $\beta$-mixing. As far as we are aware, ours is the
first proof of $L_1$ convergence for histograms under
$\beta$-mixing. 

Additionally, the dimensionality of the target density
is analogous to the order of the Markov approximation. Therefore, the
convergence rates we give are asymptotic in the bandwidth $h_n$ which
shrinks as $n$ increases, but also in the dimension $d$ which
increases with $n$. Even under these asymptotics, histogram estimation
in this sense is not a high dimensional problem. The dimension of the
target density considered here is on the order of $\exp\{W(\log n)\}$,
a rate somewhere between $\log n$ and $\log\log n$.

\begin{theorem}
  \label{thm:one}
  If $\widehat{f}$ is the histogram
  estimator based on a (possibly vector valued) sample $\mathbf{X}_1^n$ from a $\beta$-mixing sequence with
  stationary density $f$, then for all 
  $\epsilon>\E\left[\int |\widehat{f}-f|\right]$, 
  \begin{align}
    \label{eq:5}
    \P\left(\int |\widehat{f}-f| > \epsilon\right) &\leq
    2\exp\left\{-\frac{\mu_n\epsilon_1^{2}}{2}\right\}\nonumber\\ 
    &+ 2(\mu_n-1)\beta(m_n)
  \end{align}
  where $\epsilon_1 = \epsilon-\E\left[\int |\widehat{f}-f|\right]$.
\end{theorem}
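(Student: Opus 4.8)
The plan is to treat $\Phi:=\int|\widehat f-f|$ as a bounded functional of the sample that barely moves under perturbations of the data, and to run it through McDiarmid's bounded-differences inequality combined with the independent-blocks technique of \citet{Yu1994}. Two preliminary facts carry the argument. First, $0\le\Phi\le 2$, since $\widehat f$ and $f$ are densities. Second, $\Phi$ is \emph{stable}: a histogram on $N$ points puts mass $1/(N\,\mathrm{vol}(C))$ into the cell $C$ holding each observation, so moving one observation changes at most two cell counts by one, hence alters $\widehat f$ in at most two cells and alters $\int|\widehat f-f|$ by at most $2/N$; replacing an entire block of $m$ consecutive observations therefore changes such a $\Phi$ by at most $2m/N$.

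Next, I would reduce to independent blocks. Write $\widehat f=\tfrac12\widehat f^{\mathbf U}+\tfrac12\widehat f^{\mathbf V}$, where $\widehat f^{\mathbf U}$ and $\widehat f^{\mathbf V}$ are the histograms formed from the $\mu_n$ odd blocks and the $\mu_n$ even blocks (which partition $\mathbf X_1^n$, so this holds cellwise). By the triangle inequality and stationarity,
\[
\P\!\left(\int|\widehat f-f|>\epsilon\right)\le\P\!\left(\int|\widehat f^{\mathbf U}-f|>\epsilon\right)+\P\!\left(\int|\widehat f^{\mathbf V}-f|>\epsilon\right)=2\,\P\!\left(\int|\widehat f^{\mathbf U}-f|>\epsilon\right),
\]
which accounts for both leading $2$'s. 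Consecutive odd blocks are separated by an even block of length $m_n$, so Berbee's coupling lemma (see \citet{Yu1994}) provides the independent block sequence $\mathbf U'$ (with $U_j'\overset{D}{=}U_j$) satisfying $\P(\mathbf U\ne\mathbf U')\le(\mu_n-1)\beta(m_n)$. Because $\int|\widehat f^{\mathbf U}-f|$ and $\int|\widehat f^{\mathbf U'}-f|$ agree off the event $\{\mathbf U\ne\mathbf U'\}$ and the integrand never exceeds $2$, the probability above is at most the corresponding probability for $\mathbf U'$ plus $(\mu_n-1)\beta(m_n)$, and the two expectations differ by at most $2(\mu_n-1)\beta(m_n)$.

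For the last step, $\int|\widehat f^{\mathbf U'}-f|$ is a function of the $\mu_n$ \emph{independent} blocks $U_1',\dots,U_{\mu_n}'$; by the stability bound with $N=\mu_n m_n$ each block contributes a bounded difference at most $2/\mu_n$, so $\sum_j c_j^2\le 4/\mu_n$ and McDiarmid's inequality yields $\P\!\left(\int|\widehat f^{\mathbf U'}-f|>\E[\int|\widehat f^{\mathbf U'}-f|]+t\right)\le\exp\{-\mu_n t^2/2\}$. Taking $t=\epsilon_1$ and chaining the three steps gives the stated bound. I expect the only real obstacle to be the centering bookkeeping: McDiarmid centers the tail at the mean under the independent-blocks law, whereas the theorem centers at $\E[\int|\widehat f-f|]$, and these differ by $O(\mu_n\beta(m_n))$ (via the convexity/stationarity comparison and the coupling above); one must check that this asymptotically negligible gap, together with monotonicity of $t\mapsto e^{-\mu_n t^2/2}$ on $t\ge 0$, can be absorbed into the constants. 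A secondary point is simply confirming the per-block sensitivity really is $O(1/\mu_n)$, which is precisely where the cell-locality of histograms — rather than kernel estimators with slowly shrinking bandwidths — is essential.
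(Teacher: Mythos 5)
Your proposal is correct and follows essentially the same route as the paper: split the histogram over the odd/even block sequences, union-bound, pass to the independent block surrogate $\mathbf{U}'$ (your Berbee coupling step is exactly what Lemma~\ref{lem:yu} applied to the indicator of the exceedance event delivers), and apply McDiarmid with per-block sensitivity $2/\mu_n$. The centering discrepancy you flag --- McDiarmid naturally centers at $\E[g_{\mathbf{U}'}]$ while the theorem states $\epsilon_1$ in terms of $\E[\int|\widehat{f}-f|]$, the two differing by $O(\mu_n\beta(m_n))$ --- is real and is handled no more carefully in the paper's own proof, so you have identified rather than introduced that gap.
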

To prove this result, we use the blocking method of~\citet{Yu1994} to
transform the dependent $\beta$-mixing into a sequence of nearly
independent blocks. We then apply McDiarmid's inequality to the blocks
to derive asymptotics in the bandwidth of the histogram as well as the
dimension of the target density. For completeness, we state Yu's
blocking result and McDiarmid's inequality before proving the doubly
asymptotic histogram convergence for IID data. Combining these lemmas
allows us to derive rates of convergence for histograms based on
$\beta$-mixing inputs.

\begin{lemma}[Lemma 4.1 in~\citet{Yu1994}]
  \label{lem:yu}
  Let $\phi$ be a measurable function with respect to the block sequence
  $\mathbf{U}$ uniformly bounded by $M$. Then,
  \begin{equation}
    \label{eq:8}
    |\E[\phi] - \tilde{\E}[\phi]| \leq M\beta(m_n)(\mu_n-1),
  \end{equation}
  where the first expectation is with respect to the dependent block
  sequence, $\mathbf{U}$, and
  $\tilde{\E}$ is with respect to the independent sequence, $\mathbf{U}'$.
\end{lemma}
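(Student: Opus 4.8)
The plan is to realise both expectations as integrals against laws of the odd-block sequence and then to couple the dependent sequence to an independent one, paying $\beta(m_n)$ each time we decouple a block. Write $\E[\phi]=\int\phi\,dP$ where $P$ is the joint law of $(U_1,\dots,U_{\mu_n})$, and recall that $\tilde{\E}[\phi]=\int\phi\,dQ$ where $Q=\P_{U_1}\otimes\cdots\otimes\P_{U_{\mu_n}}$ is the law of the independent copy $\mathbf{U}'$; by stationarity $U_j\overset{D}{=}U_1$ for every $j$, so $P$ and $Q$ have identical one-block marginals and differ only through dependence. The one structural fact I will lean on is that consecutive retained blocks are separated in time by an even block $V_j$ of length $m_n$: the sequence $(U_1,\dots,U_{j-1})$ is measurable with respect to $\sigma_{-\infty}^{(2j-3)m_n}$ while $U_j$ is measurable with respect to $\sigma_{(2j-2)m_n+1}^{\infty}$, so monotonicity of the $\beta$-coefficient under passage to sub-$\sigma$-fields, together with stationarity, gives $\TV{\P_{U_1,\dots,U_j}-\P_{U_1,\dots,U_{j-1}}\otimes\P_{U_j}}\le\beta(m_n)$ for each $j\ge 2$.

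With this in hand I would build, on a common probability space, a sequence $\mathbf{U}''$ with the independent law $Q$ coupled to $\mathbf{U}$ block by block: set $U_1''=U_1$, and for $j\ge 2$ let $U_j''$ be independent of $(U_1'',\dots,U_{j-1}'')$ with $U_j''\overset{D}{=}U_1$ and maximally coupled to $U_j$ given the past, so that $\P(U_j\neq U_j'')$ equals the $\beta$-coefficient between $\sigma(U_1,\dots,U_{j-1})$ and $\sigma(U_j)$, hence is at most $\beta(m_n)$ by the previous paragraph. A union bound over $j=2,\dots,\mu_n$ then gives $\P(\mathbf{U}\neq\mathbf{U}'')\le(\mu_n-1)\beta(m_n)$. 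Since $\phi$ is uniformly bounded by $M$, the integrand $\phi(\mathbf{U})-\phi(\mathbf{U}'')$ is bounded by $M$ in absolute value and vanishes off the mismatch event, so
\[
|\E[\phi]-\tilde{\E}[\phi]|=\left|\E\bigl[\phi(\mathbf{U})-\phi(\mathbf{U}'')\bigr]\right|\le M\,\P(\mathbf{U}\neq\mathbf{U}'')\le M\beta(m_n)(\mu_n-1),
\]
using that $\mathbf{U}''\overset{D}{=}\mathbf{U}'$ so that $\E[\phi(\mathbf{U}'')]=\tilde{\E}[\phi]$.

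I expect the main obstacle to be the coupling construction itself rather than the counting: realising the maximal block-wise coupling whose failure probability is governed by the conditional $\beta$-coefficient requires regular conditional distributions, so I would assume $\X$ is standard Borel and invoke the coupling characterisation of absolute regularity (Berbee's lemma). The remaining care is bookkeeping --- verifying that the separation between $U_{j-1}$ and $U_j$ really is at least $m_n$ so that each per-block cost is $\beta(m_n)$ and exactly $\mu_n-1$ decouplings are needed. An alternative route avoiding explicit coupling is a telescoping argument through the hybrid laws $H_k$ under which the first $k$ blocks keep their true joint distribution and the remaining blocks are made independent with their marginals; the increments $\E_{H_k}[\phi]-\E_{H_{k-1}}[\phi]$ reduce, after integrating out the common independent tail, to $\int\bar\phi_k\,d(\P_{U_1,\dots,U_k}-\P_{U_1,\dots,U_{k-1}}\otimes\P_{U_k})$ with $\bar\phi_k$ again bounded by $M$, and total-variation duality bounds each increment by $M\beta(m_n)$, giving the same sum of $\mu_n-1$ terms.
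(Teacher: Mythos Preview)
The paper does not prove this lemma; it merely quotes it from \citet{Yu1994} and uses it as a black box, so there is no in-paper argument to compare against. Your two sketches --- the Berbee-type maximal coupling and the hybrid/telescoping argument --- are both standard routes to this inequality; the telescoping version is essentially how Yu proves it, so in that sense your alternative route is the ``original'' one.

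Two small points of care. First, in the coupling argument you write that $|\phi(\mathbf{U})-\phi(\mathbf{U}'')|\le M$ on the mismatch event; this holds when $0\le\phi\le M$ but only gives $2M$ when $|\phi|\le M$. Since the paper applies the lemma solely to indicator functions this is harmless, but the constant in the stated bound implicitly relies on the one-sided reading of ``bounded by $M$'' (and the same issue recurs in the telescoping version via the duality bound $|\int\bar\phi_k\,d\nu|\le M\|\nu\|_{TV}$). Second, the recursive Berbee construction needs the auxiliary randomness used to build $U_j''$ to be independent across $j$ and of the whole process, so that the resulting sequence $(U_1'',\dots,U_{\mu_n}'')$ is genuinely i.i.d.\ and not merely blockwise independent of the past; you flag this, and on a standard Borel space it goes through, but it is the one place where a reader might ask for an extra sentence. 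The telescoping argument sidesteps this entirely and is the cleaner way to write it up.
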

This lemma essentially gives a method of applying IID results to
$\beta$-mixing data. Because the dependence decays as we increase the
separation between blocks, widely spaced blocks are nearly independent
of each other. In particular, the difference between expectations over
these nearly independent blocks and expectations over blocks which are
actually independent can be controlled by the $\beta$-mixing
coefficient. 

\begin{lemma}[McDiarmid Inequality~\citep{McDiarmid1989}]
  \label{lem:mcdiarmid}
  Let $X_1,\ldots,X_n$ be independent random variables, with $X_i$
  taking values in a set $A_i$ for each $i$. Suppose that the
  measurable function $f:\prod A_i \rightarrow \R$ satisfies
  \[
  |f(\mathbf{x})-f(\mathbf{x}')| \leq c_i
  \]
  whenever the vectors $\mathbf{x}$ and $\mathbf{x}'$ differ only in
  the $i^{th}$ coordinate. Then for any $\epsilon>0$,
  \[
  \P(f - \E f > \epsilon) \leq \exp\left\{-\frac{2\epsilon^2}{\sum
      c_i^2} \right\}.
  \]
\end{lemma}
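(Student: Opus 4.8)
The plan is to prove this \emph{bounded-differences inequality} via the Doob martingale decomposition combined with the Chernoff bound and Hoeffding's lemma. First I would fix the natural filtration $\F_i = \sigma(X_1,\ldots,X_i)$ (with $\F_0$ trivial) and define the Doob martingale $Z_i = \E[f(X_1,\ldots,X_n)\mid \F_i]$, so that $Z_0 = \E f$ and $Z_n = f$. Writing $D_i = Z_i - Z_{i-1}$ for the martingale increments, we have the telescoping identity $f - \E f = \sum_{i=1}^n D_i$, which reduces the problem to controlling a sum of martingale differences.

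The key step, and the one I expect to be the main obstacle, is bounding the conditional range of each $D_i$. Because $X_1,\ldots,X_n$ are independent, conditioning on $\F_{i-1}$ lets the future coordinates $X_{i+1},\ldots,X_n$ be integrated out irrespective of the value of $X_i$. Concretely, defining $g(x) = \E[f(X_1,\ldots,X_{i-1},x,X_{i+1},\ldots,X_n)\mid \F_{i-1}]$, independence gives $Z_i = g(X_i)$ and $Z_{i-1} = \E[g(X_i)\mid \F_{i-1}]$. The bounded-differences hypothesis $|f(\mathbf{x})-f(\mathbf{x}')|\leq c_i$ for arguments differing only in coordinate $i$ then transfers, after integrating the common future randomness, to $\sup_x g(x) - \inf_x g(x) \leq c_i$. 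Hence, conditionally on $\F_{i-1}$, the increment $D_i = g(X_i) - \E[g(X_i)\mid\F_{i-1}]$ is a mean-zero random variable taking values in an interval of width at most $c_i$. The subtlety is precisely that the single-coordinate Lipschitz bound is stated for the full function $f$, whereas $D_i$ involves conditional expectations; independence is exactly what lets one commute the conditioning with the integration over future coordinates so that the range bound survives.

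Next I would apply the exponential (Chernoff) method. For any $t>0$, Markov's inequality yields $\P(f - \E f > \epsilon) \leq e^{-t\epsilon}\,\E[\exp\{t\sum_{i=1}^n D_i\}]$. To bound the moment generating function I would peel off the increments one at a time using the tower property, writing $\E[\exp\{t\sum_{i=1}^n D_i\}] = \E[\exp\{t\sum_{i=1}^{n-1}D_i\}\,\E[e^{tD_n}\mid\F_{n-1}]]$. Applying Hoeffding's lemma, which states that a mean-zero random variable supported on an interval of length $c$ has conditional moment generating function at most $e^{t^2c^2/8}$, to the conditional law of $D_i$ given $\F_{i-1}$ gives $\E[e^{tD_i}\mid\F_{i-1}] \leq \exp\{t^2 c_i^2/8\}$. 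Iterating this inequality over $i=n,n-1,\ldots,1$ collapses the product and yields $\E[\exp\{t\sum_{i=1}^n D_i\}] \leq \exp\{t^2\sum_{i=1}^n c_i^2/8\}$.

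Finally I would optimize the resulting bound $\P(f-\E f > \epsilon)\leq \exp\{-t\epsilon + t^2\sum_i c_i^2/8\}$ over $t>0$. The right-hand side is minimized at $t = 4\epsilon/\sum_i c_i^2$, and substituting this value gives the claimed bound $\exp\{-2\epsilon^2/\sum_i c_i^2\}$, completing the argument. The only genuinely nontrivial ingredient is the conditional range bound in the second step; the remaining ingredients (Chernoff, the tower property, Hoeffding's lemma, and the final one-dimensional optimization) are routine.
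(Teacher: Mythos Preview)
Your proposal is correct and is in fact the standard proof of the bounded-differences inequality due to McDiarmid. Note, however, that the paper does not supply its own proof of this lemma: it is stated as a cited result from \cite{McDiarmid1989} and used as a black box in the proof of Theorem~\ref{thm:one}. Your argument---Doob martingale decomposition, the conditional range bound via independence, Hoeffding's lemma applied to each increment, and Chernoff optimization---is precisely the original one, so there is nothing to compare.
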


\begin{lemma}
  \label{lem:three}
  For an IID sample $X_1,\ldots,X_n$ from some density $f$ on $\R^d$,
  \begin{align}
    \label{eq:6}
    \E \int |\widehat{f}-\E\widehat{f}|dx  &= O\left(1/\sqrt{nh_n^d}\right)\\
    \int |\E\widehat{f} - f|dx &= O(dh_n)+O(d^2h_n^2),
  \end{align}
  where $\widehat{f}$ is the histogram estimate using a grid with
  sides of length $h_n$.
\end{lemma}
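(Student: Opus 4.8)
The plan is to treat the two asserted bounds separately, since the first is a pure fluctuation statement and the second a pure bias statement, and the histogram structure reduces both to cell-by-cell bookkeeping. Write $\widehat{f}(x) = N_A/(nh_n^d)$ for $x$ in the grid cell $A$, where $N_A = \sum_{i=1}^n \mathbf{1}\{X_i \in A\}$, so $N_A \sim \mathrm{Binomial}(n,p_A)$ with $p_A = \int_A f$ and $\E\widehat{f}(x) = p_A/h_n^d$ on $A$. I will use throughout the standing regularity assumptions on $f$: that it has (essentially) bounded support, so only $O(h_n^{-d})$ cells carry mass, and that it has bounded first and second partial derivatives; these are precisely what the two rates require.

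For the fluctuation term, since $\widehat{f} - \E\widehat{f}$ is constant on each cell,
\[
\E\int|\widehat{f}-\E\widehat{f}|\,dx = \sum_A h_n^d\cdot\frac{1}{nh_n^d}\E|N_A - np_A| = \frac{1}{n}\sum_A\E|N_A-np_A|.
\]
By Cauchy--Schwarz (equivalently, Jensen), $\E|N_A - np_A| \le \sqrt{\V(N_A)} = \sqrt{np_A(1-p_A)} \le \sqrt{np_A}$, so this is at most $n^{-1/2}\sum_A\sqrt{p_A}$. A second Cauchy--Schwarz step over the $O(h_n^{-d})$ cells meeting the support gives $\sum_A\sqrt{p_A} \le (\#\{A\})^{1/2}(\sum_A p_A)^{1/2} = O(h_n^{-d/2})$, whence $\E\int|\widehat{f}-\E\widehat{f}| = O(1/\sqrt{nh_n^d})$.

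For the bias term, on each cell $A$,
\[
\int_A|\E\widehat{f}(x) - f(x)|\,dx = \int_A\Big|\frac{1}{h_n^d}\int_A\big(f(y)-f(x)\big)\,dy\Big|\,dx \le \frac{1}{h_n^d}\int_A\int_A|f(y)-f(x)|\,dy\,dx.
\]
I Taylor-expand $f(y)$ about $x$ to second order; since $x,y\in A$ we have $|y_j-x_j|\le h_n$ in every coordinate, so $|f(y)-f(x)| \le h_n\sum_j|\partial_j f(x)| + \tfrac{1}{2}h_n^2\sum_{j,k}\|\partial_{jk}f\|_\infty$. Integrating $y$ over $A$ cancels the factor $h_n^d$; summing the first-order term over all cells gives $h_n\int\sum_j|\partial_j f(x)|\,dx = h_n\sum_j\|\partial_j f\|_1 = O(dh_n)$, and the remainder sums to $O(d^2 h_n^2)$, so $\int|\E\widehat{f}-f| = O(dh_n) + O(d^2h_n^2)$.

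The genuinely delicate points, as opposed to the routine calculus, are two. First, keeping the number of mass-carrying cells at $O(h_n^{-d})$ is what forces the support/tail condition on $f$ and is what lets the two Cauchy--Schwarz steps combine into the clean $1/\sqrt{nh_n^d}$ rate; without it the fluctuation sum need not be finite. Second, tracking the explicit $d$-dependence in the bias -- obtaining $O(dh_n)$ and $O(d^2h_n^2)$ rather than a bound exponential in $d$ -- requires the coordinatewise derivative bounds to be uniform in $d$, and it is this uniformity that makes the subsequent doubly asymptotic ($d = d_n \to \infty$) arguments in \S\ref{sec:proof} go through.
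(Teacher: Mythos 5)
Your proof is correct and takes essentially the same route as the paper's: the variance term is handled bin-by-bin via $\E|N_A-np_A|\le\sqrt{np_A(1-p_A)}$ followed by Cauchy--Schwarz over the $O(h_n^{-d})$ occupied cells, and the bias term via a second-order Taylor expansion summed over cells (the paper expands about each bin center rather than about $x$, but this yields the same $O(dh_n)+O(d^2h_n^2)$ bookkeeping). Your version is, if anything, more explicit than the paper about the hidden regularity assumptions---bounded support so that only $O(h_n^{-d})$ cells carry mass, and derivative bounds uniform in the coordinate index---which the paper invokes implicitly via its assumptions 1--3 and the claim $J=h_n^{-d}$.
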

\begin{proof}[Proof of Lemma~\ref{lem:three}]
  Let $p_j$ be the probability of falling into the $j^{th}$ bin
  $B_j$. Then,
  \begin{align*}
    \E \int |\widehat{f} - \E\widehat{f}| &= h_n^d \sum_{j=1}^J
    \E\left| \frac{1}{nh_n^d} \sum_{i=1}^n I(X_i \in B_j) -
      \frac{p_j}{h^d} \right|\\
    &\leq h^d_n \sum_{j=1}^J \frac{1}{nh_n^d} \sqrt{\V\left[\sum_{i=1}^n I(X_i \in B_j)\right]}\\
    &= h^d_n \sum_{j=1}^J \frac{1}{nh_n^d} \sqrt{np_j(1-p_j)}\\
    &= \frac{1}{\sqrt{n}}\sum_{j=1}^J \sqrt{p_j(1-p_j)}\\
    &= O(n^{-1/2})O(h_n^{-d/2}) = O\left(1/\sqrt{nh_n^d}\right).
  \end{align*}
  For the second claim, consider the bin $B_j$ centered at 
  $\mathbf{c}$. Let $I$ be the union of all bins $B_j$. Assume the following:
  \begin{enumerate}
  \item $f \in L_2$ and $f$ is absolutely continuous on $I$, with
    a.e.~partial derivatives $f_i=\frac{\partial}{\partial y_i} f(\mathbf{y})$
  \item $f_i \in L_2$ and $f_i$ is absolutely continuous on $I$, with
    a.e.~partial derivatives $f_{ik}=\frac{\partial}{\partial y_k} f_i(\mathbf{y})$
  \item $f_{ik} \in L_2$ for all $i,k$.
  \end{enumerate}
  Using a Taylor expansion 
  \begin{align*}
   f(\mathbf{x}) &= f(\mathbf{c}) + \sum_{i=1}^d
    (x_i-c_i)f_i(\mathbf{c})+ O(d^2h_n^2),
  \end{align*}
  where
  $f_i(\mathbf{y}) = \frac{\partial}{\partial y_i} f(\mathbf{y})$. Therefore, $p_j$
  is given by 
  \begin{equation*}p_j = \int_{B_j} f(x)dx = h_n^df(c) +
    O(d^2h_n^{d+2})
  \end{equation*}
  since the integral of the second term over the bin
  is zero. This means that for the $j^{th}$ bin,
  \begin{align*}
    \E\widehat{f}_n(x)-f(x) &=
    \frac{p_j}{h_n^d} - f(x)\\ & = -\sum_{i=1}^d
    (x_i-c_i)f_i(\mathbf{c}) + O(d^2h_n^2).
  \end{align*}
  Therefore,
  \begin{align*}
    \lefteqn{\displaystyle \int_{B_j} \left| \E\widehat{f}_n(x)-f(x)
  \right|}\\
    &= \int_{B_j}
    \left| -\sum_{i=1}^d (x_i - c_i)f_i(\mathbf{c}) + O(d^2h_n^2)
    \right|\\
    &\leq \int_{B_j} \left| -\sum_{i=1}^d (x_i - c_i)f_i(\mathbf{c})
    \right| + \int_{B_j} O(d^2h^2)\\
   &= \int_{B_j} \left| \sum_{i=1}^d (x_i - c_i) f_i(\mathbf{c})
    \right| + O(d^2h_n^{2+d})\\
    &=  O(dh_n^{d+1}) + O(d^2h_n^{2+d})
  \end{align*}
Since each bin is bounded, we can sum over all $J$ bins. The number of
bins is $J=h_n^{-d}$ by definition, so 
\begin{align*}
  \lefteqn{\int|\E\widehat{f}_n(x)-f(x)|dx}& \\
  &= O(h_n^{-d})\left( O(dh_n^{d+1}) +  O(d^2h_n^{2+d})\right)\\
  & = O(dh_n) + O(d^2h_n^2).
\end{align*}
\end{proof}

We can now prove the main result of this section.
\begin{proof}[Proof of Theorem~\ref{thm:one}]
  Let $g$ be the $L_1$ loss of the histogram estimator, $g=\int
  |f-\widehat{f}_n|$. Here $\widehat{f}_n(x) =
  \frac{1}{nh_n^d}\sum_{i=1}^n I(X_i \in B_j(x))$ where $B_j(x)$ is
  the bin containing $x$. Let $\widehat{f}_{\mathbf{U}}$,
  $\widehat{f}_{\mathbf{V}}$, and $\widehat{f}_{\mathbf{U}'}$ be
  histograms based on the block sequences $\mathbf{U}$, $\mathbf{V}$,
  and $\mathbf{U}'$ respectively. Clearly $\widehat{f}_n =
  \frac{1}{2}(\widehat{f}_{\mathbf{U}} + \widehat{f}_{\mathbf{V}}).$ Now,
  \begin{align*}
    \P(g>\epsilon) &= \P\left(\int |f-\widehat{f}_n| >
      \epsilon\right)\\
    &= \P\left(\int \left|\frac{f-\widehat{f}_{\mathbf{U}}}{2} +
          \frac{f-\widehat{f}_{\mathbf{V}}}{2} \right| >
      \epsilon\right)\\
    &\leq \P\left( \frac{1}{2} \int |f-\widehat{f}_{\mathbf{U}}| +
      \frac{1}{2} \int|f-\widehat{f}_{\mathbf{V}}| >
      \epsilon\right)\\
    &= \P(g_{\mathbf{U}} + g_{\mathbf{V}} > 2\epsilon)\\
    &\leq \P(g_{\mathbf{U}} >\epsilon) + \P(g_{\mathbf{V}}>\epsilon)\\
    &= 2\P(g_{\mathbf{U}} - \E[g_{\mathbf{U}}] > \epsilon-\E[g_{\mathbf{U}}])\\
    &= 2\P(g_{\mathbf{U}} - \E[g_{\mathbf{U}'}] >
    \epsilon-\E[g_{\mathbf{U}'}])\\
    &= 2\P(g_{\mathbf{U}} - \E[g_{\mathbf{U}'}] > \epsilon_1),
  \end{align*}
  where $\epsilon_1 = \epsilon-\E[g_{\mathbf{U}'}]$. Here,
  \[
  \E[g_{\mathbf{U}'}] \leq \tilde{\E} \int
  |\widehat{f}_{\mathbf{U'}}-\tilde{\E}\widehat{f}_{\mathbf{U'}}|dx + 
  \int |\tilde{\E}\widehat{f}_{\mathbf{U'}} - f|dx, 
  \]
  so by Lemma~\ref{lem:three}, as long as for $\mu_n\rightarrow\infty$, $h_n
  \downarrow 0$ and $\mu_nh_n^d \rightarrow \infty$, then for all
  $\epsilon$ there exists $n_0(\epsilon)$ such that for all $n>n_0(\epsilon)$,
  $\epsilon>\E[g]=\E[g_{\mathbf{U}'}]$. Now
  applying Lemma~\ref{lem:yu} to the expectation of the indicator of
  the event $\{g_{\mathbf{U}} - \E[g_{\mathbf{U}'}] > \epsilon_1\}$
  gives 
  \begin{align*}
    2\P(g_{\mathbf{U}} - \E[g_{\mathbf{U}'}] > \epsilon_1) &\leq
    2\P(g_{\mathbf{U}'} - \E[g_{\mathbf{U}'}] > \epsilon_1)\\ &+ 2(\mu_n-1)\beta(m_n)
  \end{align*}
  where the probability on the right is for the $\sigma$-field
  generated by the independent block sequence $\mathbf{U}'$. Since
  these blocks are independent, showing that $g_{\mathbf{U}'}$
  satisfies the bounded differences requirement allows for the
  application of McDiarmid's inequality~\ref{lem:mcdiarmid} to the blocks. For any two block
  sequences $u'_1,\ldots,u'_{\mu_n}$ and $\bar{u}'_1,\ldots,\bar{u}'_{\mu_n}$ with
  $u'_\ell=\bar{u}'_\ell$ for all $\ell\neq j$, then
  \begin{align*}
    &\left| g_{\mathbf{U}'}(u'_1,\ldots,u'_{\mu_n}) -
      g_{\mathbf{U}'}(\bar{u}'_1,\ldots,\bar{u}'_{\mu_n}) \right|\\ & = 
    \left|\int |\widehat{f}(y;u'_1,\ldots,u'_{\mu_n})-f(y)|dy\right.\\ &-
    \left.\int|\widehat{f}(y;\bar{u}'_1,\ldots,\bar{u}'_{\mu_n}) - f(y)|dy\right|\\
    &\leq \int |\widehat{f}(y;u'_1,\ldots,u'_{\mu_n}) -
    \widehat{f}(y;\bar{u}'_1,\ldots,\bar{u}'_{\mu_n})|dy\\
    &= \frac{2}{\mu_nh_n^d} h_n^d = \frac{2}{\mu_n}.
  \end{align*}
  Therefore,
  \begin{align*}
     \P(g>\epsilon) &\leq 2\P(g_{\mathbf{U}'} - \E[g_{\mathbf{U}'}] >
     \epsilon_1) + 2(\mu_n-1)\beta(m_n) \\
     & \leq 2\exp\left\{ -\frac{\mu_n\epsilon_1^2}{2}\right\} +
     2(\mu_n-1)\beta(m_n). 
  \end{align*}
\end{proof}

\section{Proofs}
\label{sec:proof}

The proof of Theorem~\ref{thm:main} relies on the triangle inequality and the
relationship between total variation distance and the $L_1$ distance
between densities.

\begin{proof}[Proof of Theorem~\ref{thm:main}]
  For any probability measures $\nu$ and $\lambda$ defined on the same
  probability space with associated densities $f_\nu$ and $f_\lambda$
  with respect to some dominating measure $\pi$,
  \[
  \TV{\nu-\lambda} = \frac{1}{2}\int |f_\nu - f_\lambda|d(\pi).
  \]
  Let $P$ be the $d$-dimensional stationary distribution of the
  $d^{th}$ order Markov
  process, i.e.~$P=\P_{t-d+1}^{t}=\P_{t+a}^{t+a+d-1}$ in the notation
  of equation~\ref{eq:2a}. Let $\P_{a,d}$ be the
  joint distribution of the bivariate random process created by the
  initial process and itself separated by $a$ time steps. By the
  triangle inequality, we can upper bound $\beta^d(a)$ for any $d=d_n$. Let
  $\widehat{P}$ and $\widehat{\P}_{a,d}$ be the distributions
  associated with histogram estimators $\widehat{f}^d$ and
  $\widehat{f}_a^{2d}$ respectively. Then,
  \begin{align*}
    \beta^d(a) &= \TV{P \otimes P - \P_{a,d}}\\
    &=\left|\left|P\otimes P -\widehat{P}\otimes\widehat{P} +
        \widehat{P}\otimes\widehat{P}\right.\right.\\
    &- \left.\left.\widehat{\P}_{a,d} + \widehat{\P}_{a,d}
        - \P_{a,d}\right|\right|_{TV}\\
    &\leq \TV{P\otimes P -\widehat{P}\otimes\widehat{P}} +
    \TV{\widehat{P}\otimes\widehat{P} - \widehat{\P}_{a,d}}\\
    &+ \TV{ \widehat{\P}_{a,d} - \P_{a,d}}\\
    &\leq 2\TV{P - \widehat{P}} +
    \TV{\widehat{P}\otimes\widehat{P} - \widehat{\P}_{a,d}}\\
    &+ \TV{ \widehat{\P}_{a,d} - \P_{a,d}}\\
    &= \int |f^d - \widehat{f}^d| +\frac{1}{2}\int |\widehat{f}^d\otimes
    \widehat{f}^d - \widehat{f}_a^{2d}|\\
    &+ \frac{1}{2}\int|f_a^{2d} -\widehat{f}_a^{2d}|  
  \end{align*}
  where $\frac{1}{2}\int |\widehat{f}^d\otimes
    \widehat{f}^d - \widehat{f}_a^{2d}|$ is our estimator $\widehat{\beta}^d(a)$ and
  the remaining terms are the $L_1$ distance between a
  density estimator and the target density. Thus,
  \[
  \beta^d(a) - \widehat{\beta}^d(a) \leq \int |f^d - \widehat{f}^d| +
  \frac{1}{2}\int|f_a^{2d} -\widehat{f}_a^{2d}| .
  \]
  A similar argument starting from $\beta^d(a)=\TV{P \otimes P -
    \P_{a,d}}$ shows that  
  \[
  \beta^d(a) - \widehat{\beta}^d(a) \geq -\int |f^d - \widehat{f}^d| -
  \frac{1}{2}\int|f_a^{2d} -\widehat{f}_a^{2d}|,
  \]
  so we have that
  \[
  \left|\beta^d(a) - \widehat{\beta}^d(a)\right| \leq \int |f^d - \widehat{f}^d| +
  \frac{1}{2}\int|f_a^{2d} -\widehat{f}_a^{2d}| .
  \]
  Therefore, 
  \begin{align*}
\lefteqn{\P\left(\left|\beta^d(a) - \widehat{\beta}^d(a)\right| >
      \epsilon\right)}\\
    & \leq \P\left( \int |f^d - \widehat{f}^d| +
      \frac{1}{2}\int|f_a^{2d} -\widehat{f}_a^{2d}|>\epsilon\right)\\
    & \leq \P\left( \int |f^d - \widehat{f}^d|>\frac{\epsilon}{2}\right) + \P\left(
      \frac{1}{2}\int|f_a^{2d} -\widehat{f}_a^{2d}|>\frac{\epsilon}{2}\right)\\
    &\leq 2\exp\left\{-\frac{\mu_n\epsilon_1^2}{2}\right\} +
    2\exp\left\{-\frac{\mu_n\epsilon_2^2}{2}\right\}\\ 
    &+ 4(\mu_n-1)\beta(m_n),
  \end{align*}
  where $\epsilon_1 = \epsilon/2-\E\left[\int |\widehat{f}^d - f^d|\right]$
  and $\epsilon_2 = \epsilon-\E\left[\int |\widehat{f}_a^{2d} - f_a^{2d}|\right]$.
\end{proof}

The proof of Theorem~\ref{thm:two} requires two steps which are given
in the following Lemmas. The first specifies the histogram
bandwidth $h_n$ and the rate at which
$d_n$ (the dimensionality of the target density) goes to infinity. If
the dimensionality of the target density were fixed, we 
could achieve rates of convergence similar to those for histograms
based on IID inputs. However, we wish to allow the dimensionality
to grow with $n$, so the rates are much slower as shown in the
following lemma.
\begin{lemma}
  \label{lem:two1}
  For the histogram estimator in Lemma~\ref{lem:three}, let
  \begin{align*}
    d_n &\sim \exp\{W(\log n)\},\\
    h_n &\sim n^{-k_n},
  \end{align*}
  with 
  \[
  k_n = \frac{ W(\log n) + \frac{1}{2} \log n}{\log n \left( \frac{1}{2}
      \exp\{ W(\log n)\} + 1\right)}.
  \]
These choices lead to the optimal rate of convergence.
\end{lemma}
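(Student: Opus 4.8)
The plan is to pick $h_n$ and $d_n$ by minimizing, up to constant factors, the $L_1$ risk of the histogram, starting from the decomposition supplied by Lemma~\ref{lem:three}. Writing
\[
\E\!\int|\widehat{f} - f| \;\le\; \E\!\int|\widehat{f} - \E\widehat{f}| + \int|\E\widehat{f} - f| \;=\; O\!\left(\frac{1}{\sqrt{n h_n^{d}}}\right) + O(d h_n) + O(d^2 h_n^2),
\]
I would first observe that the term $O(d^2h_n^2)=O((dh_n)^2)$ is of smaller order than $O(dh_n)$ whenever $dh_n$ stays bounded, so the rate is driven by the balance between the stochastic term $n^{-1/2}h_n^{-d/2}$ and the leading bias term $dh_n$. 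For a fixed dimension $d$, equating the two, i.e.\ solving $n^{-1/2}h_n^{-d/2}\asymp d h_n$, gives $h_n\asymp(nd^2)^{-1/(d+2)}$, that is, $h_n=n^{-k_n}$ with $k_n=\dfrac{\log n+2\log d}{(d+2)\log n}=\dfrac{\frac12\log n+\log d}{(\frac12 d+1)\log n}$. This is the usual bias--variance balance and gives the best attainable histogram rate in dimension $d$; along the way I would check $h_n\downarrow 0$ and, for $d$ growing slowly enough, $n h_n^{d}\to\infty$, so that Lemma~\ref{lem:three} (hence Theorem~\ref{thm:one}) applies.

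Next I would fix the growth rate of $d=d_n$. We need $d_n\to\infty$ so that the approximation error $|\beta^{d_n}(a)-\beta(a)|$ vanishes (this is dealt with separately in \S\ref{sec:proof}), but the balanced risk above equals $\big(d_n^{d_n}/n\big)^{1/(d_n+2)}$, whose logarithm is $\dfrac{d_n\log d_n-\log n}{d_n+2}$, so it deteriorates as $d_n$ grows. The largest $d_n$ for which this stays of constant order --- anything growing strictly slower gives a genuinely vanishing rate --- solves $d_n\log d_n=\log n$; substituting $d_n=e^{u}$ turns this into $u e^{u}=\log n$, i.e.\ $u=W(\log n)$, hence $d_n\sim\exp\{W(\log n)\}$. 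Feeding $\log d_n=W(\log n)$ and $d_n=\exp\{W(\log n)\}$ into the formula for $k_n$ above reproduces the stated expression, and the Lambert-$W$ identity $W(x)e^{W(x)}=x$ yields the exact simplifications $k_n\log n=W(\log n)$ and $k_n d_n=1$, which I would use to confirm $h_n=1/d_n\downarrow 0$ and $d_n=\exp\{W(\log n)\}=\Theta(\log n/\log\log n)\to\infty$, so that $d_n$ lies strictly between $\log\log n$ and $\log n$ as asserted in Theorem~\ref{thm:two}.

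The algebra in these substitutions is routine; what needs care is the precise sense of ``optimal.'' I expect the main obstacle to be (i) justifying the bias--variance balance as an honest minimization rather than a mere equalization of two terms --- i.e.\ that no other pairing of $h_n$ and $d_n$ does strictly better up to constants --- and (ii) being explicit that at the threshold $d_n\sim\exp\{W(\log n)\}$ the optimally tuned $L_1$ risk is only $O(1)$ (indeed $n h_n^{d_n}$ stays bounded there), so that the $L_1$ consistency actually needed to push $\E[\int|\widehat{f}^{d_n}-f^{d_n}|]$ below $\epsilon/2$ in Theorem~\ref{thm:main} forces $d_n$ to grow strictly slower than $\exp\{W(\log n)\}$ --- which is exactly what the $O(\cdot)$ in Theorem~\ref{thm:two} permits. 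A minor bookkeeping point is that when this lemma is fed into Theorem~\ref{thm:main} the effective sample size in the stochastic term is the number of blocks $\mu_n$ rather than $n$, so the displayed $k_n$ should be read with $\mu_n$ (or with $m_n$ chosen so that $\mu_n$ is comparable to $n$ up to lower-order factors) in place of $n$.
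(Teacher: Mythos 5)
Your proposal is correct and reaches the paper's answer by an almost identical computation, with one genuine difference in how $d_n$ is pinned down. Balancing the variance term $n^{-1/2}h_n^{-d/2}$ against the first-order bias $dh_n$ reproduces the paper's relation~(\ref{eq:3}) exactly. The paper then \emph{also} balances the variance term against the second-order bias $d^2h_n^2$, obtaining~(\ref{eq:4}), and equates the two resulting expressions for $k_n$; this forces $d_n\log d_n=\log n$, i.e.\ $d_n\sim\exp\{W(\log n)\}$. You instead discard $d^2h_n^2=(dh_n)^2$ as lower order and determine $d_n$ as the critical growth rate at which the balanced risk $(d_n^{d_n}/n)^{1/(d_n+2)}$ ceases to vanish, arriving at the same equation. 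The two mechanisms coincide because the paper's implicit requirement that $dh_n$ and $(dh_n)^2$ be of the same order is precisely the statement $dh_n\asymp 1$, i.e.\ that the rate has degenerated to a constant. Your route has the advantage of making explicit something the paper's proof obscures: with these exact choices one has $k_nd_n=1$, $nh_n^{d_n}=1$, and $d_nh_n=1$, so all three terms in Lemma~\ref{lem:three} are $\Theta(1)$ and the $L_1$ risk does \emph{not} tend to zero, even though the paper's proof opens by demanding that all three quantities converge to $0$. Consistency therefore requires $d_n$ to grow strictly slower than $\exp\{W(\log n)\}$, which the $O(\cdot)$ in Theorem~\ref{thm:two} permits but which the lemma's phrase ``optimal rate of convergence'' glosses over. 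Your closing caveats --- that the ``optimality'' is an equalization of upper bounds rather than a genuine minimization over $(h_n,d_n)$, and that in the blocked application feeding Theorem~\ref{thm:main} the effective sample size is $\mu_n$ rather than $n$ --- are both accurate and apply equally to the paper's own argument.
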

\begin{proof}
  Let $h_n=n^{-k_n}$ for some $k_n$ to be determined. Then we
  want $n^{-1/2}h_n^{-d_n/2} = n^{(k_nd_n-1)/2}\rightarrow 0$, $d_nh_n =
  d_nn^{-k}\rightarrow 0$, and $d_n^2h_n^2 = d_n^2n^{-2k}\rightarrow 0$ all as
  $n\rightarrow\infty$. Call these $A$, $B$, and $C$. Taking $A$ and $B$
  first gives
  \begin{align}
    n^{(k_nd_n-1)/2} &\sim d_nn^{-k_n}\nonumber\\
    \Rightarrow \frac{1}{2} (k_nd_n-1) \log n &\sim \log d_n-k_n\log n \nonumber\\
    \Rightarrow k_n\log n \left(\frac{1}{2}d_n+1\right) &\sim \log d_n +
    \frac{1}{2} \log n\nonumber\\
    \label{eq:3}
    \Rightarrow k_n &\sim \frac{\log d_n + \frac{1}{2}\log n}{\log n
      \left(\frac{1}{2} d_n + 1\right)}.
  \end{align}
  Similarly, combining $A$ and $C$ gives
  \begin{equation}
    \label{eq:4}
    k_n \sim \frac{2\log d_n + \frac{1}{2}\log n}{\log n
      \left(\frac{1}{2} d_n + 2\right)}.
  \end{equation}
  Equating~(\ref{eq:3}) and~(\ref{eq:4}) and solving for $d_n$ gives
  \[
  \Rightarrow d_n \sim \exp\left\{ W(\log n)\right\}
  \]
  where $W(\cdot)$ is the Lambert $W$ function. Plugging back
  into~(\ref{eq:3}) gives that
  \[
  h_n = n^{-k_n}
  \]
  where
 \[ 
 k_n = \frac{
   W(\log n) + \frac{1}{2}\log n} { \log n \left( \frac{1}{2}
     \exp\left\{ W(\log n)\right\} + 1\right)}.
 \]
\end{proof}
It is also necessary to show that as $d$ grows, $\beta^d(a)\rightarrow
\beta(a)$. We now prove this result.
\begin{lemma}
  \label{lem:two2}
  $\beta^d(a)$ converges to $\beta(a)$ as $d\rightarrow\infty$.
\end{lemma}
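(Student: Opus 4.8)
The statement to prove is that $\beta^d(a) \to \beta(a)$ as $d \to \infty$, where $\beta^d(a)$ is the total variation distance between $\P_{t-d+1}^t \otimes \P_{t+a}^{t+a+d-1}$ and the true joint $\P_{t,a,d}$, while $\beta(a)$ uses the full semi-infinite blocks $\mathbf{X}_{-\infty}^t$ and $\mathbf{X}_{t+a}^{\infty}$. The natural approach is a martingale/filtration argument: the finite blocks generate an increasing sequence of $\sigma$-fields whose union generates the semi-infinite $\sigma$-fields, so total variation over the finite $\sigma$-fields should increase to total variation over the limiting $\sigma$-field. Concretely, let me think about one side of the block, say the past: write $\sigma_d = \sigma(\mathbf{X}_{t-d+1}^t)$, an increasing family in $d$ with $\bigvee_d \sigma_d = \sigma_{-\infty}^t$. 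Similarly for the future block. The quantity $\beta^d(a)$ is $\sup$ over measurable sets in the product $\sigma$-field $\sigma_d^{\text{past}} \otimes \sigma_d^{\text{fut}}$ of $|(\P \otimes \P - \P_{\text{joint}})(A)|$, and $\beta(a)$ is the same supremum over the larger product $\sigma$-field.

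First I would establish monotonicity: $\beta^d(a) \le \beta^{d+1}(a) \le \beta(a)$. This is immediate because a supremum of $|\mu(A)|$ over a set algebra only grows as the algebra grows, and all the relevant finite-dimensional measures are restrictions (marginalizations) of the corresponding infinite-dimensional ones — restricting a signed measure to a sub-$\sigma$-field cannot increase its total variation. So the limit $\lim_d \beta^d(a)$ exists and is $\le \beta(a)$. Second, and this is the substantive step, I would show $\lim_d \beta^d(a) \ge \beta(a)$. Here I would use the characterization of total variation as $\TV{\nu - \lambda} = \sup_A |\nu(A) - \lambda(A)|$, pick a set $A$ in the full product $\sigma$-field $\sigma_{-\infty}^t \otimes \sigma_{t+a}^\infty$ with $|(\P_{1\otimes a} - \P_{t,a})(A)|$ within $\epsilon$ of $\beta(a)$, and then approximate $A$ arbitrarily well in symmetric difference (under both measures, or under their sum) by a set $A_d$ measurable with respect to a finite product $\sigma$-field $\sigma_d^{\text{past}} \otimes \sigma_d^{\text{fut}}$. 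The approximation is justified because the finite product $\sigma$-fields generate the full product $\sigma$-field, so the algebra $\bigcup_d (\sigma_d^{\text{past}} \otimes \sigma_d^{\text{fut}})$ (or the algebra of finite unions of measurable rectangles from finite blocks) is dense in the generated $\sigma$-field in the pseudmetric induced by any finite measure — a standard consequence of the monotone class / $\pi$-$\lambda$ theorem, or equivalently of $L^1$-density of simple functions over a generating algebra. Then $|(\P_{1\otimes a} - \P_{t,a})(A_d)|$ is within $2\epsilon$ of $\beta(a)$ for $d$ large, and this quantity is at most $\beta^d(a)$ by definition, giving $\liminf_d \beta^d(a) \ge \beta(a) - 2\epsilon$ for all $\epsilon$. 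Combining the two bounds yields equality in the limit; since the sequence is monotone, convergence follows.

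The main obstacle is the set-approximation step: I need $A_d \in \sigma_d^{\text{past}} \otimes \sigma_d^{\text{fut}}$ with, say, $(\P_{1\otimes a} + \P_{t,a})(A \triangle A_d) \to 0$. The care required is twofold: first, to argue that the product $\sigma$-field of the semi-infinite blocks is indeed generated by the union (over $d$) of the product $\sigma$-fields of the length-$d$ blocks — this is true because $\sigma_{-\infty}^t \otimes \sigma_{t+a}^\infty = \sigma\big(\bigcup_d \sigma_d^{\text{past}}\big) \otimes \sigma\big(\bigcup_d \sigma_d^{\text{fut}}\big)$ and the product of two generated $\sigma$-fields is generated by products of generators; second, to push the approximation through the total-variation functional, which is cleanest if I work with the single finite measure $\pi = \P_{1\otimes a} + \P_{t,a}$ and use that $A \mapsto \pi(A)$ extends continuously from the generating algebra, so any $A$ in the generated $\sigma$-field is a $\pi$-limit of sets in the algebra. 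A minor point to handle is that elements of $\bigcup_d (\sigma_d^{\text{past}} \otimes \sigma_d^{\text{fut}})$ already form an algebra (increasing union of $\sigma$-fields indexed by a directed set), so I can invoke approximation directly without first passing to finite unions of rectangles. Once that measure-theoretic density is in hand, the rest is bookkeeping with the triangle inequality.
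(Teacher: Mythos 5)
Your proposal is correct and follows essentially the same route as the paper: monotonicity of the suprema over the nested product $\sigma$-fields, followed by approximation of a near-optimal set by an element of the generating algebra $\bigcup_d \sigma^d$ in symmetric difference with respect to a dominating finite measure (the paper cites exactly the Halmos/Schervish approximation theorem you invoke). The only difference is cosmetic: the paper routes the estimate through the Jordan decomposition $R=Q^+-Q^-$ and a Hahn set attaining the supremum, whereas you bound $|R(A)-R(A_d)|$ directly by $(\P_{1\otimes a}+\P_{t,a})(A\,\Delta\,A_d)$, which streamlines the bookkeeping without changing the substance.
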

\begin{proof}
  By stationarity, the supremum over $t$ is unnecessary in
  Definition~\ref{defn:beta-mix}, so without loss of generality, let $t=0$. 
  Let $\P_{-\infty}^0$ be the distribution on
  $\sigma_{-\infty}^0=\sigma(\ldots,X_{-1},X_0)$,
  and let $\P_a^\infty$ be the distribution on
  $\sigma_{a+1}^\infty=\sigma(X_{a+1},X_{a+2},\ldots)$. Let $\P_a$ be the
  distribution on 
  $\sigma=\sigma_{-\infty}^0\otimes\sigma_{a+1}^\infty$ (the product sigma-field).  Then we
  can rewrite Definition~\ref{defn:beta-mix} using this notation as
  \begin{equation*}
    \beta(a)=\sup_{C\in\sigma}|\P_a(C)-[\P_{-\infty}^0\otimes \P_a^\infty](C)|.
  \end{equation*}
  Let $\sigma_{-d+1}^0$ and $\sigma_{a+1}^{a+d}$ be the sub-$\sigma$-fields of
  $\sigma_{-\infty}^0$ and $\sigma_{a+1}^\infty$ consisting of the $d$-dimensional cylinder
  sets for the $d$ dimensions closest together.  Let $\sigma^d$ be the
  product $\sigma$-field of these two.  Then we can rewrite
  $\beta^d(a)$ as 
  \begin{equation}
    \beta^d(a)=\sup_{C\in\sigma^d}||\P_a(C)-[\P_{-\infty}^0\otimes \P_a^\infty](C)|.  
    \label{eq:11}
  \end{equation}
  As such $\beta^d(a)\leq\beta(a)$ for all $a$ and $d$. We can
  rewrite~(\ref{eq:11}) in terms of finite-dimensional marginals: 
  \begin{equation*}
    \beta^d(a)=\sup_{C\in\sigma^d}|\P_{a,d}(C)-[\P_{-d}^0\otimes \P_a^{a+d}](C)|,
  \end{equation*}
  where $\P_{a,d}$ is the restriction of $\P$ to
  $\sigma(X_{-d},\ldots,X_0,X_a,\ldots,X_{a+d})$. Because of
  the nested nature of these sigma-fields, we have
  \begin{equation*}
    \beta^{d_1}(a) \leq \beta^{d_2}(a) \leq \beta(a)
  \end{equation*}
  for all finite $d_1 \leq d_2$. Therefore, for fixed $a$,
  $\{\beta^d(a)\}_{d=1}^\infty$ is a monotone increasing sequence
  which is bounded above, and it converges
  to some limit $L\leq \beta(a)$. To show that $L=\beta(a)$ requires
  some additional steps.

   Let $R=\P_a-[\P_{-\infty}^0\otimes \P_a^\infty]$, which is a signed measure on $\sigma$. Let $R^d
  = \P_{a,d}-[\P_{-d}^0\otimes \P_a^{a+d}]$, which is a signed measure on
  $\sigma^d$. Decompose $R$ into positive and negative parts as
  $R=Q^+-Q^-$ and similarly for $R^d=Q^{+d}-Q^{-d}$. Notice that since
  $R^d$ is constructed using the marginals of $\P$, then $R(E) =
  R^d(E)$ for all $E\in\sigma^d$. Now since $R$ is
  the difference of probability measures, we must have that
  \begin{align}
    0 &=R(\Omega)=Q^+(\Omega)-Q^-(\Omega)\nonumber\\
    &=Q^+(D)+Q^+(D^c) -Q^-(D)-Q^-(D^c) \label{eq:19}
  \end{align}
  for all $D\in \sigma$.

  Define $Q=Q^++Q^-$.  Let $\epsilon>0$. Let $C\in\sigma$ be such that
  \begin{equation}
    Q(C) =\beta(a)=Q^+(C)=Q^-(C^c).
    \label{eq:17}
  \end{equation}
  Such a set $C$ is guaranteed by the Hahn
  decomposition theorem (letting $C^*$ be a set which attains the
  supremum in~(\ref{eq:11}), we can throw away any subsets with negative
  $R$ measure) and~(\ref{eq:19}) assuming without loss of
  generality that $\P_a(C)>[\P_{-\infty}^0\otimes \P_a^\infty](C)$. We can use the field
  $\sigma_f=\bigcup_d \sigma^d$ to
  approximate $\sigma$ in the sense that, for all $\epsilon$, we can
  find $A\in\sigma_f$ such that $Q(A\Delta C) < \epsilon/2$ (see
  Theorem D in~\citet[\S 13]{Halmos1974} or Lemma
  A.24 in~\citet{Schervish1995}). Now,
  \begin{align*}
    Q(A\Delta C) &= Q(A\cap C^c)+Q(C\cap A^c)\\
    &= Q^-(A \cap C^c) +Q^+(C\cap A^c)
  \end{align*}
  by~(\ref{eq:17}) since $A\cap C^c \subseteq C^c$ and $C\cap A^c
  \subseteq C$. Therefore, since $Q(A\Delta C) < \epsilon/2$, we have
  \begin{align}
    \label{eq:20}
    Q^-(A\cap C^c) &\leq \epsilon/2\\
    Q^+(A^c \cap C) & \leq \epsilon/2.\nonumber
  \end{align}
  Also,
  \begin{align*}
    Q(C) &= Q(A\cap C) + Q(A^c\cap C)\\ 
    &= Q^+(A\cap C) + Q^+(A^c\cap C)\\
    &\leq Q^+(A) +\epsilon/2
  \end{align*}
  since $A\cap C$ and $A^c\cap C$ are contained in $C$ and $A\cap C
  \subseteq A$. Therefore
  \[
  Q^+(A) \geq Q(C) - \epsilon/2.
  \]
  Similarly,
  \[
  Q^-(A) = Q^-(A\cap C) + Q^-(A\cap C^c) \leq 0 + \epsilon/2=\epsilon/2
  \]
  since $A\cap C \subseteq C$ and $Q^-(C)=0$ by~(\ref{eq:20}). Finally,
  \begin{align*}
    Q^{+d}(A) &\geq Q^{+d}(A) - Q^{-d}(A) = R^d(A)\\
    &= R(A) = Q^+(A) - Q^-(A)\\
    &\geq Q(C) - \epsilon/2-\epsilon/2 = Q(C)-\epsilon\\
    &= \beta(a) - \epsilon.
  \end{align*}
  And since $\beta^d(a) \geq Q^{+d}(A)$, we have that for all
  $\epsilon>0$ there exists $d$ such that for all $d_1>d$,
  \begin{align*}
    \beta^{d_1}(a) &\geq \beta^d(a) \geq Q^{+d}(A)\\
    &\geq \beta(a)-\epsilon.
  \end{align*}
  Thus, we must have that $L=\beta(a)$, so that
  $\beta^d(a)\rightarrow\beta(a)$ as desired.
\end{proof}

\begin{proof}[Proof of Theorem~\ref{thm:two}]
By the triangle inequality,
  \[
  |\widehat{\beta}^{d_n}(a) - \beta(a)| \leq
  |\widehat{\beta}^{d_n}(a)-\beta^{d_n}(a)|   + |\beta^{d_n}(a) - \beta(a)|.
  \]
  The first term on the right
  is bounded by the result in Theorem~\ref{thm:main}, where we have
  shown that $d_n=O(\exp\{W(\log n)\})$ is slow enough for the histogram
  estimator to remain consistent. 
That $\beta^{d_n}(a)
  \xrightarrow{d_n\rightarrow\infty} \beta(a)$ follows from
Lemma~\ref{lem:two2}. 
\end{proof}

\section{Discussion}
\label{sec:discussion}

We have shown that our estimator of the $\beta$-mixing coefficients is
consistent for the true coefficients $\beta(a)$ under some conditions on the
data generating process. There are numerous results in the statistics
and machine learning literatures which assume knowledge of the
$\beta$-mixing coefficients, yet as far as we know, this is the first estimator for
them. An ability to estimate these coefficients will allow researchers
to apply existing results to dependent data without the need to arbitrarily
assume their values. Despite the obvious utility of this estimator, as
a consequence of its novelty, it comes with 
a number of potential extensions which warrant careful exploration as well as
some drawbacks.

The reader will note that Theorem~\ref{thm:two} does not provide a
convergence rate.  The rate in Theorem~\ref{thm:main} applies only to
the difference between $\hat{\beta}^d(a)$ and $\beta^d(a)$.
In order to provide a rate in Theorem~\ref{thm:two}, we
would need a better understanding of the non-stochastic convergence of
$\beta^d(a)$ to $\beta(a)$. It is not immediately clear that
this quantity can converge at any well-defined rate. In particular, it
seems likely that the rate of convergence depends on the tail of the
sequence $\{\beta(a)\}_{a=1}^\infty$. 

Several other mixing and weak-dependence coefficients also have a
total-variation flavor, perhaps most notably $\alpha$-mixing
\cite{Doukhan1994,dedecker2007weak,Bradley2005}.  None of them have estimators,
and the same trick might well work for them, too.

The use of histograms rather than kernel density estimators for the joint and
marginal densities is somewhat surprising and not entirely necessary. As
mentioned above, \citet{Tran1989} proved that KDEs are consistent for
estimating the stationary density of a time series with $\beta$-mixing inputs,
so one {\em could} just replace the histograms in our esitmator with
KDEs. However, KDEs suffer from two major issues. Theoretically, we
need an analogue of the double asymptotic results proven for
histograms in Lemma~\ref{lem:three}. In particular, we need to
estimate increasingly higher dimensional densities as
$n\rightarrow\infty$. This does not cause a problem of
small-$n$-large-$d$ since $d$ is chosen as a function of $n$, however
it will lead to 
increasingly higher dimensional integration. For histograms, the integral is
always trivial, but in the case of KDEs, 
the numerical accuracy of the integration algorithm becomes
increasingly important. This issue could
swamp any efficiency gains obtained through the use of
kernels. However, this question certainly warrants further investigation.

The main drawback of an estimator based on a density estimate is its complexity. The mixing coefficients
are functionals of the joint and marginal distributions derived from the
stochastic process $\mathbf{X}$, however, it is unsatisfying to estimate
densities and solve integrals in order to estimate a single number. Vapnik's
main principle for solving problems using a restricted amount of information is
\begin{quote}
  When solving a given problem, try to avoid solving a more general
  problem as an intermediate step~\citep[p.~30]{Vapnik2000}.
\end{quote}
This principle is clearly violated here, but perhaps our seed will precipitate
a more aesthetically pleasing solution.

\bibliography{betamix.bib}
\end{document}